\newcommand{\set}[1]{\left\{#1\right\}}
\newcommand{\meq} {\overset{!}{=}}
\newcommand{\qeq} {\overset{?}{=}}
\newcommand{\abs}[1]{\left\Vert#1\right\Vert}
\newtheorem{theorem}{Theorem}[section]
\newtheorem{proposition}[theorem]{Proposition} 
\theoremstyle{definition}
\newtheorem{example}{Example}[section]
\newtheorem{definition}{Definition}[section]
\journal{arXiv}
\begin{document}

\begin{frontmatter}
\title{Conditional probability generation methods for high reliability effects-based decision making}

\author[wg]{Wolfgang Garn\corref{cor1}}
\address[wg]{Surrey Business School, University of Surrey, Guildford, Surrey, GU2 7XH, United Kingdom}
\cortext[cor1]{Tel.:+44(0)1483682005;fax:+44(0)1483689511}
\ead{w.garn@surrey.ac.uk}

\author[pl]{Panos Louvieris}
\address[pl]{School of Information Systems, Brunel University, Uxbridge, UB8 3PH, United Kingdom}
\ead{panos.louvieris@brunel.ac.uk}

\begin{abstract}
	Decision making is often based on Bayesian networks. The building blocks for Bayesian networks are its conditional probability tables (CPTs).
These tables are obtained by parameter estimation methods, or they are elicited from subject matter experts (SME).
Some of these knowledge representations are insufficient approximations.
Using knowledge fusion of cause and effect observations lead to better predictive decisions.
We propose three new methods to generate CPTs, which even work when only soft evidence is provided.
The first two are novel ways of mapping conditional expectations to the probability space. 
The third is a column extraction method, which obtains CPTs from nonlinear functions such as the multinomial logistic regression.
Case studies on military effects and burnt forest desertification have demonstrated that so derived CPTs have highly reliable predictive power, including superiority over the CPTs obtained from SMEs. In this context, new quality measures for determining the goodness of a CPT and for comparing CPTs with each other have been introduced.
The predictive power and enhanced reliability of decision making based on the novel CPT generation methods presented in this paper have been confirmed and validated within the context of the case studies.
\end{abstract}

\begin{keyword}
Decision Support Systems, Uncertainty Modelling, Bayesian Networks, Conditional Probability Table, Multinomial Logistic Regression, Reliability
\end{keyword}

\end{frontmatter}

\section{Introduction}\label{sec:introduction}
Decision Making and Modeling in high pressure, fast moving, complex  environments is often confounded by the inability of the decision model to capture the requisite variety of the situation in a parsimonious manner. Novel techniques for CPT generation which address this capability gap are presented in this paper together with their concomitant case studies. These approaches are considered for application and evaluation herein.

Bayesian Networks have long been established in the literature as a useful tool for reasoning under uncertainty. Normally there are two stages involved in creating a Bayesian Network. The first stage is the derivation of a representative graph (topology).
\citet{Pearl1988}, \citet{Mengshoel20061137} and many more have provided fundamental work in this field. The second stage - which this paper focuses on deals with the challenge of obtaining probability distributions enabling the graph to be used for reasoning under uncertainty. 

Before introducing our novel CPT generation techniques and confirming their predictive power a review of current and past approaches used to derive CPTs are considered.

Methods for generating conditional probability tables (CPTs) are: noisy-max, noisy-or and noisy-and.
\citet{Pearl1986241} introduced and discussed the noisy-or which can be traced back to \citet{GOOD02011961} and \cite{GOOD05011961}. These approaches only work for binary variables.
\citet{diez93} showed a generalized noisy-or gate for multi-valued variables. 
\citet{Zagorecki2013} examined three test instances and found that noisy-max gates were a good fit for half of the given CPTs.
Another generalization of the noisy-or is the recursive noisy-or by \citet{DBLP:journals/tsmc/LemmerG04}. \citet{Vomlel2014} introduced noisy-thresholds in combination with a novel tensor representation of CPTs. These and other noisy-methods belong to class of noisy functional dependency methods. These operate on a parsimonious set of input data (in general directly proportional to the number of states) and require additional constraints (e.g. order, independence).
Techniques employed for CPT generation in order to support decision makers in our problem space must be flexible enough to accommodate a greater degree of variety than that imposed by the Noisy-Max method which imposes a sequence order constraint on the CPT output \cite{Li2011}. Such a restriction is prone to produce erroneous CPTs. 
Furthermore, these noisy-functional methods require the observation of hard evidence and corresponding ``single'' effect probabilities. However, often only soft evidence is available combined with uncertainty in the outcome. Noisy methods have not be designed for abundant or contradicting information.
Therefore, there is a requirement to develop methods that overcome the above mentioned limitations. In this paper we present techniques that solve the above mentioned issues.

CPTs can be determined using parameter estimation methods. They can be based on large data sets. \citet{Jensen2001} introduce several batch learning methods. On the other hand CPTs may adapt, whilst in operation by incrementally improving with each new case. Some of these methods are even capable of creating the Bayesian network structure.
\citet{Cooper92abayesian} present the construction of Bayesian Networks from databases. They showed ways to derive the conditional expectancy by means of frequency observations. In the subsequent analysis we will extend this conditional expectancy approach in a natural manner by the usage of regression based CPTs.
Popular parameter estimation techniques include the maximum likelihood estimation (MLE), Bayesian estimation and Expectation-Maximization (EM).
We will discuss MLE and EM in section \ref{MLE} and \ref{EM} respectively. 
These methods can be seen as having an objective subject to constraints.
\citet{Zhou2014} describe a constrained optimization approach, which reveals similar conceptual ideas to the ones used in this paper but realizing it with different methods.
In general it is tempting to refer to to statistical learning techniques in order to derive CPTs, but ignoring the fact that most of them cannot easily be mapped to the probability space. This paper helps in closing this gap in the body of knowledge.
Our new heuristics create CPTs based on conditional expectation. A comparison of such a CPT with the ones elicited by subject matter experts demonstrate their predictive power. 
The process of eliciting the information with minimum effort has been addressed by \citet{Bhattacharjya2014} by determining the order in which the CPT should be queried from a single expert. 
\citet{Xiang2007} investigated ways of finding CPTs by proposing a causal tree model.
Finding consensus between experts is a challenge as we observed in our military case study. This issue has also been investigated in \cite{Lopez-Cruz2014}, who developed Bayesian network to facilitate this task. Our research will offer another way to offer a combined view that may be used for consensus. Furthermore, our experiments indicate that a small number of cause and effect observations can be sufficient to determine a CPT.
New methods to compare CPTs with each other are introduced. Moreover, a methodology on the variance and goodness of CPTs is fully developed and presented. A case study on military effects and one on burnt forest desertification demonstrate the stability and accuracy of the derived CPT. In general the technique is especially useful when causes and effects have been measured, leading to a wide range of applications such as healthcare \cite{Velikova2014}.

In this section existing methods have been reviewed that derive conditional probability tables. Section \ref{sec:PPD_Bayesian_Network} introduces an operator that joins multiple probability vectors into a single one. This gives the opportunity to use matrix analysis to describe fundamentals of Bayesian networks.
In section \ref{sec:CPT Approximation} we will show novel approaches to generate CPTs.
New techniques for the goodness of conditional probability tables (CPTs) are proposed in section \ref{sec:Goodness of Models}.
In section \ref{sec:PPD_NN_Maria} and \ref{sec:PPD_Real_World_Application} we consolidate and validate the theory by using case studies. These case studies demonstrate the robustness of CPTs in their predictive power.

\section{Bayesian Network}\label{sec:PPD_Bayesian_Network}
In this section we will introduce Bayesian network topics which are fundamental for the generation of conditional probability tables. Furthermore we prove the equivalence between two particular Bayesian network structures.

\subsection{The Basics}
The fundamental difference of a Bayesian Network to other network structures is the usage of random variables (conditional probabilities) as ``weights" for the nodes. The conditional probability $P(Z|X)$ of event $Z$ occurring given that event $X$ has happened first is defined by $P(Z=z|X=x) := {P(Z=z,X=x)}/{P(X=x)}$.
We will relate this to a \emph{simple Bayesian network} with two nodes: effect (child) $Z$ and cause (parent) $X$ as illustrated in figure \ref{fig:SimpleBayesianNetwork} (a).
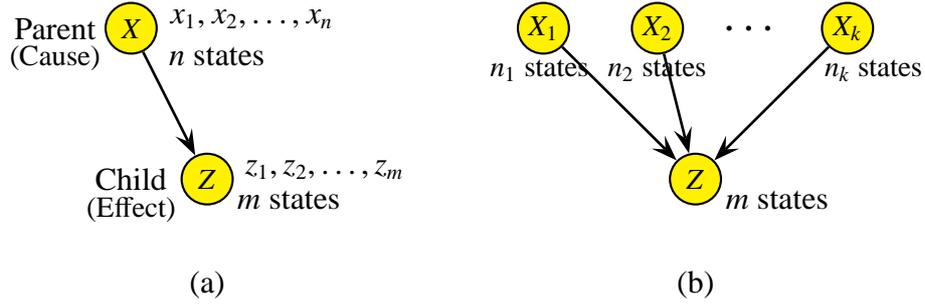
\begin{figure*}\centering
  \begin{tabular}{cc}
    \psset{unit=1cm,arrowscale=2, radius=0.35,fillstyle=solid, fillcolor=yellow}
    \begin{pspicture}(0,0)(6,4)
        \rput(1,3){Parent}
        \rput(1,2.6){\small{(Cause)}}
        \Cnode(2,3){node_X} \rput(2,3){\setlength{\fboxsep}{1pt}\colorbox{yellow}{\color{black} \small{$X$}}}
        \rput[bl](2.5,2.5){$n$ states}
        \rput[bl](2.5,3){$x_1,x_2,\dots,x_n$}

        \Cnode(3,1){node_Z} \rput(3,1){\setlength{\fboxsep}{1pt}\colorbox{yellow}{\color{black} \small{$Z$}}}
        \rput(2,1){Child}
        \rput(2,.6){\small{(Effect)}}
        \rput[bl](3.4,.6){$m$ states}
        \rput[bl](3.5,1){$z_1,z_2,\dots,z_m$}

        \ncline[linewidth=1pt]{->}{ node_X}{node_Z}

    \end{pspicture}
    &
      \psset{unit=1cm,arrowscale=2, radius=0.35,fillstyle=solid, fillcolor=yellow}
    \begin{pspicture}(0,0)(6,4)
        \Cnode(1  ,3){node_X1} \rput(1   ,3){\setlength{\fboxsep}{1pt}\colorbox{yellow}{\color{black} \small{$X_1$}}}
        \Cnode(2.5,3){node_X2} \rput(2.5 ,3){\setlength{\fboxsep}{1pt}\colorbox{yellow}{\color{black} \small{$X_2$}}}
                               \rput(3.75,3){\Large{$\dots$}}
        \Cnode(5  ,3){node_Xk} \rput(5   ,3){\setlength{\fboxsep}{1pt}\colorbox{yellow}{\color{black} \small{$X_k$}}}

        \Cnode(3  ,1){node_Z}  \rput(3  ,1){\setlength{\fboxsep}{1pt}\colorbox{yellow}{\color{black} \small{$Z$}}}
        \rput[bl](.3  ,2.3){\small{$n_1$ states}}
        \rput[bm](2.5 ,2.3){\small{$n_2$ states}}
        \rput[bl](4.7 ,2.3){\small{$n_k$ states}}
        \rput[bl](3.4,.6){$m$ states}

        \ncline[linewidth=1pt]{->}{ node_X1}{node_Z}
        \ncline[linewidth=1pt]{->}{ node_X2}{node_Z}
        \ncline[linewidth=1pt]{->}{ node_Xk}{node_Z}

    \end{pspicture}\\
    (a) & (b) \\
  \end{tabular}
  \caption{(a) Simple Bayesian Network, (b) converging Bayesian Network. \label{fig:SimpleBayesianNetwork}}
\end{figure*}

We introduce a vector-matrix notation for the probabilities which is necessary since the probability/statistic literature tends to mix single value and matrix notation.
The cause probabilities are $P(X)$ (which is a $n\times1$ vector) and the effect probabilities are $P(Z)$ (which is a $m\times1$ vector). Occasionally we will abbreviate the probability vectors as $x$ and $z$ (rather than reserving these letters to symbolize the state of the random variable). For now we assume that the conditional probability table (CPT) for the Bayesian node $Z$ is known. The CPT $P(Z|X)$ is abbreviated - with the new notation, $z|x$, which is the $m\times n$ matrix.
In \citet{Pearl1988} the notation $M_{z|x}$ is used for the CPT and called conditional probability matrix or link matrix. To make rows and columns explicit:
\begin{gather}
 P(Z|X) = \left(
             \begin{array}{ccc}
               P(Z=z_1|X=x_1) & \dots & P(Z=z_1|X=x_n) \\
               \vdots & \ddots & \vdots \\
               P(Z=z_m|X=x_1) & \dots & P(Z=z_m|X=x_n) \\
             \end{array}
           \right).
\end{gather}
Typical possible notations for the conditional probability table are: $P(Z|X) = M_{z|x} = C$.

The joint probability matrix is defined in a similar way $\widehat{z,x} := P(Z,X)$, which is also a $m\times n$ matrix. It can be derived from the conditional probability matrix by:
\begin{equation}
     P(Z,X) = P(Z|X) \cdot (P(X)~e_n),
\end{equation}
where the point $\cdot$ represents element by element multiplication and $e_n$ is the a row vector $n$ ones. 
A more efficient notation would be $\widehat{z,x} := z|x \cdot (x e_n)$.

The effect probabilities in matrix notation are computed by:
\begin{equation}
    P(Z) = P(Z|X) P(X),
\end{equation}
which follows from the definition of the conditional probability and the marginalisation of the joint probability table. 
Again this could be more efficient expressed as $z = z|x~x$.

The new notation provides a simplified and more efficient representation of the probability matrix formulation and proves particularly useful when it comes to representing complex Bayesian Network computations. Here, we would like to draw the readers attention to \citet{Vomlel2014} representation of CPTs as tensors. 

\subsection{Cause Computation}
Given the effects and conditional probabilities it is possible to determine the causes using \emph{Bayes Rules}:
\begin{gather}\label{eq:Bayes-rule}
P(Z=z|X=x)P(X=x) = P(X=x|Z=z)P(Z=z).
\end{gather}
 From Bayes rule we obtain $P(X=x|Z=z) = {P(X=x,Z=z)}/{P(Z=z)}$, which reverses the effect computation and leads to a conditional probability matrix. Let us put this into matrix formulation:
\begin{equation}
    x = \widehat{x,z} \div (e_m z') \Leftrightarrow P(X) = P(X,Z) \div (e_m P(Z)'),
\end{equation}
where $e_m$ is the vector consisting out of $m$ ones and $\div$ is the element-wise division operation. Thus we have a method to determine the cause probabilities given the effects. This allows us to deduce the causes given the effects.
Note that $z = Cx \Leftrightarrow (C C')^{-1}C'z=x$ cannot be used because it leaves the probability space.


\subsection{Converging Network}
\begin{definition}
A \emph{Bayesian network} $\mathfrak{B}$ is a simple directed acyclic graph, which consists out of a sequence of nodes $N$ with an associated sequence of weights $W$ and a sequence of arcs $A$.
\end{definition}
The \emph{weights} in a discrete Bayesian network are conditional probability tables or probability vectors. Note that a probability vector can be interpreted as a special case of a conditional probability table.
The literature \cite{AI_Russell:113848} distinguish between three types of Bayesian Networks: discrete, continuous and hybrid Bayesian Networks.
In this document we will deal only with discrete BN and will simply call them BN.

\begin{definition}
Let $\mathfrak{B}_1$ be a Bayesian Network (BN) illustrated in figure \ref{fig:SimpleBayesianNetwork}(b) and defined by :
\begin{equation}\begin{array}{l}
  N:=\langle Z,X_1,X_2,\dots,X_k\rangle, \\
  W:=\langle P(Z|X_1,\dots,X_k),P(X_1),P(X_2),\dots,P(X_k) \rangle, \\
  A:=\langle (X_1,Z), (X_2,Z),\dots, (X_k,Z) \rangle ~\mbox{and}~k\geq1.
\end{array}\end{equation}
Any Bayesian networks of the above structure will be called a \emph{converging Bayesian network}.
\end{definition}
A representation of multiple nodes as a single node is known as a \emph{cluster node} \cite{Pearl1988}.

\begin{definition}
A BN $\mathfrak{B}_2$ (shown in figure \ref{fig:SimpleBayesianNetwork}(a)) is defined by :
\begin{equation} N:=\langle Z,X\rangle,  W:=\langle P(Z|X),P(X)\rangle,  A:=\langle (X,Z) \rangle\end{equation} and will be referred to as \emph{simple Bayesian network}.
\end{definition}

Any Bayesian Networks of the structure shown in figure \ref{fig:SimpleBayesianNetwork}(b) can be transformed to a Bayesian network having the structure displayed in \ref{fig:SimpleBayesianNetwork}(a). That means the parent $X$ in figure \ref{fig:SimpleBayesianNetwork}(a) has $n=n_1+n_2+\dots+n_k$ states.

\begin{theorem}\label{theorem:BN_Trans}
There is a bijective mapping between a converging BN $\mathfrak{B}_1$ and a simple BN $\mathfrak{B}_2$, such that the effect probabilities are equal.
\end{theorem}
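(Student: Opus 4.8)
The plan is to realise the bijection as the classical parent-merging (cluster-node) operation and then verify that it leaves the effect vector $z=z|x\,x$ unchanged. First I would define the forward map $\phi:\mathfrak{B}_1\mapsto\mathfrak{B}_2$. I take the single parent $X$ of $\mathfrak{B}_2$ to have its states indexed by the tuples $(i_1,\dots,i_k)$ with $i_l\in\{1,\dots,n_l\}$, that is, by the Cartesian product of the individual parent state sets, and I fix once and for all a bijection (say the mixed-radix ordering) between these tuples and $\{1,\dots,n\}$. Because $\mathfrak{B}_1$ carries no arcs among the parents, their joint law factorises, so I set the prior of $\mathfrak{B}_2$ to the Kronecker product $x:=P(X_1)\otimes\cdots\otimes P(X_k)$, and I let the column of $z|x$ indexed by $(i_1,\dots,i_k)$ be the conditional vector $P(Z\mid X_1=x_{i_1},\dots,X_k=x_{i_k})$ copied verbatim from the CPT of $\mathfrak{B}_1$.

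Next I would confirm that $\phi$ produces a legitimate simple BN. The vector $x$ is nonnegative, and since the all-ones row vector distributes over the Kronecker product, its entries sum to $\prod_{l=1}^{k}1=1$, so $x$ is a probability vector; each column of $z|x$ is column-stochastic because it is a copy of a column of the converging CPT. The decisive step is effect preservation: applying the effect formula $z=z|x\,x$ and expanding along the tuple index gives
\begin{equation*}
z=\sum_{i_1,\dots,i_k}P(Z\mid X_1=x_{i_1},\dots,X_k=x_{i_k})\prod_{l=1}^{k}P(X_l=x_{i_l}),
\end{equation*}
which is precisely the marginalisation that computes $P(Z)$ in $\mathfrak{B}_1$ when the parents are independent. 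Hence $\phi$ preserves the effect probabilities.

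To upgrade this to a bijection I would exhibit the inverse: given $\mathfrak{B}_2$, de-index the columns of $z|x$ through the same tuple ordering to recover $P(Z\mid X_1,\dots,X_k)$, and recover each $P(X_l)$ by marginalising $x$ over the remaining coordinates. Checking $\phi^{-1}\circ\phi=\mathrm{id}$ and $\phi\circ\phi^{-1}=\mathrm{id}$ is then a routine index chase. The main obstacle, and the point I would be most careful about, is that $\phi^{-1}$ is only well defined on the simple BNs whose prior $x$ is separable, i.e.\ a Kronecker product of $k$ probability vectors; for a generic prior the marginalise-then-remultiply step does not return $x$. I would therefore state the bijection between converging BNs and exactly this subclass of simple BNs (the faithful image of $\phi$), with injectivity on the subclass following from the recoverability of the Kronecker factors up to the fixed ordering. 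I would also note that for the clustered state count to host an arbitrary converging CPT one needs $n=\prod_{l}n_l$ rather than a smaller value.
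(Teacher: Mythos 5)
Your proposal is correct and follows essentially the same route as the paper: your Kronecker product $P(X_1)\otimes\cdots\otimes P(X_k)$ is exactly the paper's combine operator $\circlearrowleft$ (Algorithm~\ref{alg:nNodes2One} is an iterated outer product followed by column-stacking, which with the mixed-radix ordering is the Kronecker product), your column-copying of the CPT matches the paper's identification $C = z|x_1x_2\dots x_n$, and your inverse-by-marginalisation is the paper's ``split $P(X)$ by considering the matrix $Y$.''

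Where you genuinely add value is the last paragraph. The paper's proof asserts a bijection but its inverse direction silently assumes that the simple BN's prior, when reshaped into the matrix $Y$, is an outer product of marginals --- i.e.\ that $x$ is separable. For a generic simple BN this fails: marginalising $x$ and re-multiplying the marginals does not return $x$, so the map from simple BNs back to converging BNs is not defined on all of $\mathfrak{B}_2$'s kind, and the claimed bijection only holds onto the subclass of simple BNs with Kronecker-factorable priors (the image of your $\phi$). You state this restriction explicitly; the paper does not. Your version is therefore the more defensible formulation of the theorem, and your explicit expansion of $z = z|x\,x$ into the sum over tuples $\sum_{i_1,\dots,i_k} P(Z\mid X_1=x_{i_1},\dots,X_k=x_{i_k})\prod_l P(X_l=x_{i_l})$ supplies the effect-preservation verification that the paper compresses into a single concluding sentence.
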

\begin{proof}
If $n=1$ the mapping is trivial. If $n=2$ then the transpose of the outer product $Y = (x_1 x_2')'$ is a $n_2 \times n_1$ matrix. Let $Y_:$ designate the transformation of the $Y$ matrix into the $y$ column vector by concatenating all column vectors of $Y$. $y$ represents $P(X)$ for $\mathfrak{B}_2$.
Conversely $P(X)$ can be split into $x_1$ and $x_2$, if the number of states are known by considering the matrix $Y$. The vectors are obtained by realising that the outer product constitutes a joint probability table. Applying Bayes rule (eq. \ref{eq:Bayes-rule}) leads to the reverse CPT.

If $n \in \mathbb{N}$, then $P(X)$ can be derived by algorithm \ref{alg:nNodes2One}. The other direction requires the knowledge of the number of states and that we can reverse the CPT multiplication. We will elaborate on this, let $C=z|x_1x_2\dots x_n=P(Z|X_1,\dots,X_n)$. The effect probabilities can be computed by $z=C x$. Note that $x$ cannot be determined by $x= (C'C)^{-1} C'z$ even when $C'C$ is nonsingular ($\det{(C'C)}\neq0$), because this would violate probability axioms. Hence, we have to determine the reverse of $C$ namely $x|z$ by using Bayes' rule (see equation \ref{eq:Bayes-rule}). Knowledge of the number of states allows us to create a joint matrix.  Marginalisation of the joint matrix leads to the probability vectors $x_1,x_2,\dots,x_n$. Thus we have shown the computation of the probability product. Hence, the computed effect probabilities of $\mathfrak{B}_1$ and  $\mathfrak{B}_2$ are equal.
\end{proof}

\subsection{Combine Operator}
Important in the above proof was the usage of algorithm \ref{alg:nNodes2One}, which combines multiple independent probability vectors into a single one; we define this as the \emph{combine operator}
$\circlearrowleft: [0,1]^{n_1} \times \dots \times [0,1]^{n_k} \rightarrow [0,1]^{n_1 \times \dots \times n_k}$.
\begin{algorithm}[h!]
    \caption{Multiple parents to single parent.
    (Operator $\circlearrowleft$)}
    \label{alg:nNodes2One}
    \begin{algorithmic}[1]
        \REQUIRE $x_1,\dots,x_n$ \dots parallel causes (column vectors)
        \ENSURE $x$ \dots product cause node from parallel causes

        \medskip
        \STATE $x := x_1$  assign first cause
        \FOR{$k=2$ to $n$ (for all remaining parallel causes)}
            \STATE $Y := (x x_k')'$  compute outer product and transpose
            \STATE $x := Y_:$ transform matrix into one single column vector by concatenating all columns
        \ENDFOR
    \end{algorithmic}
\end{algorithm}
Note that algorithm \ref{alg:nNodes2One} has a repeated application of the outer product and a matrix to vector transformation.
Just like with any other operator based on multiplication it is only possible in special cases to reverse the combine operation in a unique way.

\subsection{Evidence}
Evidence may be given as a sequence of probability ratios. For instance assume that the odds are 7 to 5, which is transformed into the probability vector $(.58~.42)$. This type of evidence will be called \emph{soft evidence}. It is also common in the BN literature to express evidence by stating that a node has assumed a certain state, which we will call \emph{hard evidence}. This is equivalent to having a 100\% probability in one state. We will call such a node an \emph{evidence node}. A converging network with all parents having hard evidence leads to the selection of exactly one column from the CPT.
\begin{proposition}\label{prop:CPT-column}
The effect probabilities are identical to one unique column of the CPT if all parents are evidence nodes (i.e. each parent has exactly one 100\% state) in a converging subnet.
\end{proposition}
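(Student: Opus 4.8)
The plan is to reduce the general converging subnet to the single-parent case via the combine operator, and then observe that the effect computation $z = Cx$ selects a single column of the CPT precisely when the combined cause vector $x$ is a unit vector. First I would record the elementary observation underlying everything: if $C$ is the $m \times n$ CPT and $u_j$ denotes the $j$-th standard unit column vector (a $1$ in position $j$, zeros elsewhere), then $C u_j$ equals the $j$-th column of $C$, directly from the definition of matrix--vector multiplication. Thus the whole proposition reduces to showing that when every parent carries hard evidence, the combined cause vector $x$ produced by algorithm \ref{alg:nNodes2One} is exactly such a unit vector.

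The key step is therefore to analyse the combine operator $\circlearrowleft$ on hard-evidence inputs. Each evidence parent $X_i$ has probability vector $x_i = u_{j_i}$ for some state index $j_i$. The operator iterates the transposed outer product followed by vectorization. For two unit vectors the matrix $(u_{j_1} u_{j_2}')'$ has a single nonzero entry (a $1$), so its column-concatenation $Y_:$ is again a unit vector. I would then argue by induction on the number of parents $k$: if after combining the first $k-1$ parents the running vector is a unit vector, the same transposed-outer-product-plus-vectorization step with the next unit vector $u_{j_k}$ again yields a unit vector. Hence $x = u_J$ for a single combined index $J$.

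Next I would invoke Theorem \ref{theorem:BN_Trans} (and algorithm \ref{alg:nNodes2One}) to justify that the converging subnet with parents $X_1,\dots,X_k$ and the simple network with the single combined parent $X$ give identical effect probabilities, so that $P(Z) = z = Cx = C u_J$. Combined with the elementary observation, $z$ equals the $J$-th column of $C$. Uniqueness of that column follows because the index $J$ is determined injectively by the tuple of evidence states $(j_1,\dots,j_k)$: the combine mapping is a bijection between the product index set $\{1,\dots,n_1\}\times\cdots\times\{1,\dots,n_k\}$ and $\{1,\dots,n\}$ with $n=n_1\cdots n_k$, so distinct hard-evidence configurations address distinct columns and a fixed configuration addresses exactly one.

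The main obstacle I anticipate is purely bookkeeping rather than conceptual: making the index arithmetic of the vectorization explicit, i.e. verifying that the column-major concatenation $Y_:$ sends the nonzero position $(j_1,j_2)$ of the outer product to a well-defined single coordinate, and that iterating this yields a consistent mixed-radix index $J$. Once this indexing is pinned down, the injectivity giving uniqueness is immediate, and the remainder is the one-line matrix identity $C u_J$.
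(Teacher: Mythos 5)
Your proof is correct, and it takes the route the paper itself implies: the paper states this proposition without an explicit proof, and your argument --- combining the hard-evidence parents via the operator $\circlearrowleft$ of algorithm \ref{alg:nNodes2One} into a single unit vector $u_J$, then noting $z = C u_J$ is the $J$-th column of $C$ --- is precisely the reasoning supported by the surrounding machinery (the matrix formulation $z = Cx$ and Theorem \ref{theorem:BN_Trans}). Your induction showing that outer-product-plus-vectorization of unit vectors yields a unit vector, and the mixed-radix bijection giving uniqueness of the selected column, supply exactly the bookkeeping the paper leaves implicit.
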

This proposition is significant for the relationship between multinomial logistic regression and CPTs, which will be discussed later.

 Of course a combination of multiple evidence nodes is also possible which leads to selection of certain columns and their summation. This linear combination may be used to compute effect probabilities in a more efficient way.

\subsection{Joint Probability using the Combine Operator}
In general the joint probability of a converging network as shown in figure \ref{fig:SimpleBayesianNetwork} (b) is computed by:
\begin{equation}
    P(X_1, X_2, \dots, X_n, Z) = P(Z|X_1,\dots X_n) \prod_{k=1}^n  P(X_k).
\end{equation}
This has to be done for each state configuration. Of course we could apply the combine operator ("vectorising" outer product), which gives us the matrix representation:
\begin{equation}
    \widehat{z, x_1,x_2,\dots, x_n} = z|x_1x_2\dots x_n . (e \circlearrowleft_{k=1}^n x_k,)
\end{equation}
where $e$ is the ones vector needed for the point-wise multiplication with the combinational product of all probability vectors.

\subsection{Summary}
In this section we have proven the computational identity of converging and simple BN.
We have proposed a novel probability combination operator $\circlearrowleft$, which transforms probabilities of multiple parents into a single probability vector. This new operator integrates well with matrix notation. Furthermore we established an important new evidence proposition, which will be used to extract CPTs out of linear and nonlinear functions.

\section{CPT Generation}\label{sec:CPT Approximation}
\subsection{Introduction}
    In the beginning of the paper techniques to generate CPTs were outlined and their deficiencies were discussed.
		Here, the MLE and EM method will be revisited and explanations for their failings will be given. These techniques are more commonly placed in the area of parameter learning methods (see \cite{Koller2009} for details).  
    Then this section introduces new methods that can generate CPTs.
    They are based on the assumption that causes and effects can be observed.
    Even if they cannot be observed it is still better and easier for a subject matter expert to provide
    the causes and effects and use the techniques presented herein. Measures for goodness of these models (section \ref{sec:Goodness of Models}) will show that little information is required to generate CPTs with the new methods.

\subsection{Maximum Likelihood Estimation}\label{MLE}
A classic approach to obtain CPTs is to use the Maximum Likelihood Estimation (MLE):
\begin{equation}
	\hat{\theta} = \arg \max_\theta L(M_\theta|D),
\end{equation}
where $\theta$ are the parameters, $M_\theta$ the models, $D$ the given data and $L(M_\theta|D)=\prod_{d \in D} P(d|M)$. The data for a CPT is generally assumed to state the cause and effect state (as hard evidence). Hence relative frequencies can be derived, for instance:
\begin{equation}
	P(z_i|x_j)= \frac{N(z_i,x_j)}{N(x_j)}.
\end{equation}
$N(x_j)$ represents the number of occurrences of case $x_j$ in the data, and $N(z_i,x_j)$ the simultaneous occurrence of state $z_i$ and $x_j$. A few problems can be observed. 
(1) No occurrences of case $x_j$ are recorded; 
(2) $N(z_i,x_j)$ is zero (i.e. impossibility of $P(z_i|x_j)$), or 
(3) the case information was provided in a fuzzy way (soft evidence, e.g. certainty of a case $x_j$ is $d(x_j)=.7$). The fuzzy specification can be transformed into number measures by rounding $\lfloor d(x_j) \rceil$, reducing accuracy further. 
The main issue observed in the case studies was the incompleteness of data, which meant that the MLE could not be used to obtain meaningful CPTs.

\subsection{Expectation Maximization}\label{EM}
The MLE showed issues arising with incomplete data. These can be overcome using the expectation maximization algorithm. The aim is to find parameters $\theta$ such that the likelihood of the model is maximized. Initially the model and its parameters $\theta_t, t=0$ have to be assumed. This is followed by the ``Expectation'' step. Here the responsibilities are computed, which are the expected counts for the CPT:
\begin{equation}
	  E(N(z_i,x_j) | D) = \sum_{d \in D} P(z_i,x_j | d,\theta_t).
\end{equation}
The maximization step determines a new estimate for $\theta_{t}, t\leftarrow t+1$:
\begin{equation}
	\theta_{t} = \frac{E( N(z_i, x_j) | D )}{\sum_{j=1}^m E( N(z_j, x_j | D) )} 
\end{equation}
The expectation and maximization step are continued until the algorithm has converged, i.e. 
\begin{equation}
	|\ln P(D|\theta_{t+1}) - \ln P(D|\theta_t)| \leq \epsilon.
\end{equation}
One of the main issues with this procedure is the choice of the initial parameters, which will influence the local (or global) maximum achieved. One should be aware that this is a local search heuristic. That means embedding it into multi-start procedures or meta heuristics can improve the  solution quality. However, this adds to the run-time of this computationally expensive method. 
Another disadvantage - when using this method in its ``classic'' form - is that it uses occurrences (hard evidence) rather than accommodating soft evidence.
The advantage of the EM method is the iterative usage of conditional expectation. 

We will now begin to introduce new methods that use the benefits of MLE and EM. These are derived from the familiar conditional expectation (regression), but have to stay within the probability space.
  
\subsection{Conditional Mean Basis}\label{sec:PPD_Regression}
Assume that causes $x$ and effects $z$ were $k$ times observed. We will show how a CPT basis can be estimated from these observations. This is the basis for probability boundary limitation method and the probability potential surge method introduced in the subsequent sections to derive the real CPT, i.e. a CPT basis is a matrix which may still violate the probability constraints explained below.
The conditional mean function $E(z|x)$ is also called regression of $z$ on $x$.
An estimator of $E(z|x):=\sum_z z f(z|x)$ will be used to obtain the CPT.
where $f(.)$ is the probability density function (continuous case) or probability distribution (discrete case). $f(.)$ has to follow Kolmogorov's axioms.
General discussions of multiple regressions can be found in \cite{Hastie01} and \cite{Greene00}.
The application of the conditional mean to approximate the conditional probability table was used in \cite{Cooper92abayesian} in relation to Bayesian Networks. Roughly speaking they used frequencies to obtain the probabilities.

The new approach will need the following preparation.
Given are $k$ observations of the effects $Z=(z_1,\dots,z_k)'$ and causes $X=(x_1,\dots,x_k)'$. That means $Z$ can be represented as a $k\times m$ matrix and $X$ as a $k\times n$ matrix. The objective is to determine the CPT basis $B:=(b_{ij}):=z|x'$, such that the squared sum $S(B):=(Z-XB)'(Z-XB)$ obtains a minimum:
\begin{equation}
    B^*= \min_{B\in \mathbb{B}} \set{tr((Z-XB)'(Z-XB))},
\end{equation}
where $B$ transposed is the $m\times n$ matrix with elements in the interval $[0,1]$:
\begin{equation}\label{eq:prob_constraint_1} b_{ij} \in [0,1] ~ \forall i,j. \end{equation}
In order to satisfy Kolmogorov's axioms another constraint on $B$ must be fulfilled. The row sum of $B$ must add up to one:
\begin{equation}\label{eq:prob_constraint_2}
    \sum_{j=1}^m b_{ij} \meq 1 ~ \forall i \in \set{1,2,\dots,m}. 
\end{equation} 
$\mathbb{B}$ is the set of all matrices fulfilling the above mentioned two constraints. Note that the symbol $\meq$ denotes ``must equal".
Other objective functions to determine $B^*$ may be appropriate depending on the nature of the underlying problem.
However, we have used the least square method because of its characteristics and popularity.
We will limit ourselves to deriving a CPT basis using matrix calculus.
We obtain the first derivative of $S(B)$ by:
\begin{equation}
    \frac{\partial S(B)}{\partial B} = -2X'(Z-XB).
\end{equation}
Setting the first derivative to zero will give us the optimum:
\begin{equation} -2X'(Z-XB) = 2X'Z-2X'XB = 0,\end{equation}
under the assumption that $X$ has full column rank, which guarantees that $X'X$ is positive definite.
Thus, an optimal CPT basis $B^{*}$ is:
\begin{equation}\label{eq:CPT-Basis}
    \overbrace{B^* }^{k \times m} = \overbrace{(X'X)^{-1}}^{n\times n} \overbrace{X'}^{n \times k} \overbrace{Z}^{k \times m}.
\end{equation}

An alternative way of obtaining the above result is shown in the footnote
\footnote{Equation (\ref{eq:CPT-Basis}) could have been derived in a different way by looking for a $B^*$ which minimises the error matrix $E$ in $Z=XB^*+E$. Thus $E$ must be orthogonal to $XB$. That means:
$XB^* \bot E \Leftrightarrow (XB^*)'E=0 \Leftrightarrow (XB^*)'(Z-XB^*)=0 \Leftrightarrow {B^*}' (X'Z - X'XB^*)=0 \Rightarrow B^*=(X'X)^{-1}X'Z$.}.
Note that it cannot be guaranteed that the elements in $B^*$ fulfill the two constraints (\ref{eq:prob_constraint_1}) and (\ref{eq:prob_constraint_2}), because of the matrix inversion and multiplication. The case study given in section \ref{sec:PPD_Real_World_Application} supports this statement.

In order to enforce a CPT basis to become a proper CPT we propose the \emph{probability boundary limitation} and \emph{probability potential surge methods}.

\subsection{Probability boundary limitation method}\label{sec:PPD_Enforce_CPT}
If the value that is supposed to represent the probability is out of range it will be set to its closest limit. That means if $x_i<0$ then $x_i$ is set to zero and if $x_i>1$ it is assigned the value one, i.e.
\begin{equation}
    \underline{\overline{x_i}}_0^1 := \min \set{ \max \set{x_i,0}, 1} ~~~\forall i.
\end{equation}

Afterwards the column vector $\tilde{x}=\left( \underline{\overline{x_i}}_0^1 \right)$ is normalised by $x=\frac{\tilde{x}}{\sum_i \tilde{x_i}}$.
An exception which could happen is that all approximated ``probabilities" are smaller than zero, in this case one could make the approximation by shifting.
A crude solution, if all values are zero or one, is obtained by assign a uniform probability to $x$.

\subsection{Probability potential surge method}
The $\tilde{x_i}$ values of a column undergo a translation into the positive range by adding the negative of the lowest $\tilde{x_i}$ value. For instance $\tilde{x}=(-.2~-.3~.4)'$ then the lowest value is $m=-.3$ and the surge in potential leads to $\bar{x} = (.1~.0~.7)'$, which must be normalised by $\alpha^{-1} := \bar{x}'e_3=.1+0+.7=.8$. Below is the formal expression of the probability potential surge method:
\begin{equation}
    \exists m \in x: m<0 ~~~  \alpha (\underbrace{\tilde{x} - e_n \min{\tilde{x}}}_{\bar{x}}),
\end{equation}
where $\alpha$ is the normalisation coefficient ($\alpha^{-1}=\sum_i \bar{x_i} = \bar{x}' e_n$).

In section \ref{sec:PPD_Real_World_Application} we will apply these heuristics to case studies.

\cite{Berry:1993} noted that dichotomous outcomes violate the assumption of linear relationship of the variables in general. However, a logarithmic transformation preserves the non-linearity as shown in \cite{Berry&Feldman:1985}. This motivates that we will introduce the logistic regression and examine its applicability.





\subsection{Probability base vector extraction method}\label{sec:PPD_Logistic_Regression}
In this section we show a new method to extract a CPT from a logistic regression. However, the method can be applied to any function operating on $[0,1]$.

The multinominal distribution is given by:
\begin{equation}\label{eq:LL-prob}
    \begin{array}{ccl}
      P(Z=z_k) & = & e^{y_k}(1+ \sum_{k=1}^n e^{y_k})^{-1}, ~~ \forall k<n;\\
      P(Z=z_n) & = & (1+\sum_{k=1}^n e^{y_n})^{-1},
    \end{array}
\end{equation}
where $y_k= b_{k0} + \sum_{j=1}^n b_{kj} x_j = b_k [1; x']$. The logistic model is a special case of the generalised linear model, explained in \cite{McCu:Neld:1989}.
The effect described by its probabilities $P(Z)$ (dependent variable) is computed given cause probabilities (independent variables).
The likelihood is $l(b) := \prod_{k=1}^n P(Z=z_k|X=x)$, where $b$ describes the regression parameters. We will abbreviate $P(Z=z_k|X=x)$ by $p_k(x,b)$ to emphasise the dependence on $b$. This is equivalent to the logarithmic likelihood:
\begin{equation}\label{eq:LL-general}
    L(b) := \log l(b) = \sum_{k=1}^n \log p_k(x,b).
\end{equation}
Let us assume that $p_k(x,b) = p(x,b)^{c_k} (1-p(x,b))^{1-c_k}~~\forall k$, where $c_k$ describes the class of the $k$th observation and its case weight. Now we can write (\ref{eq:LL-general}) as
\begin{equation}\label{eq:LL}
   L(b) = \sum_{k=1}^n c_k \log p_k(x,b) + (1-c_k) \log (1-p_k(x,b)).
\end{equation}
To find the maximum probability of (\ref{eq:LL}) we take the first derivative and require that it must be zero:
\begin{equation}\label{eq:LL-derivative}
   \frac{\partial L(b)}{\partial b} = \sum_{k=1}^n x_k (y_k - p(x_k,b)) \meq 0.
\end{equation}
Usually equation (\ref{eq:LL-derivative}) is solved using the Newton-Raphson algorithm in order to obtain $b$.
This gives us a convenient way of computing the effect probabilities via equation \ref{eq:LL-prob}. So far this has been ``classic" multivariate data analysis which is explained in \cite{Hair_1998}, \cite{Hastie01} and \cite{Field:2009}.

Next we will introduce the new probability base vector extraction method and show that logistic regression can be used to determine the conditional probability table. Recall that in section \ref{sec:PPD_Bayesian_Network} we have shown that any column of a CPT matrix can be obtained by giving hard evidence (proposition \ref{prop:CPT-column}). That means a canonical cause matrix consisting entirely out of hard evidence determines the CPT. Thus the elements of the CPT are determined by:
\begin{equation}\label{eq:LL-CPT}
    \begin{array}{ccl}
      P(Z=z_k|X=\dot{e}_i) & = & \frac{e^{b_{k0}+ b_{ki}}}{1+ e^{b_{k0}} \sum_{j=1}^n e^{b_{kj}}}, ~~ \forall k<m, i\leq n;\\
      P(Z=z_m|X=\dot{e}_i) & = & \frac{1}{1+e^{b_{n0}} \sum_{j=1}^n e^{b_{nj}}}, ~~ \forall i\leq n,
    \end{array}
\end{equation}
where $\dot{e}_i$ is the hard evidence vector, i.e. element $i$ is one and the other elements in the vector are zero. This means that $y_k$ obtains a very simple form $y_k = b_{k0}+ b_{ki}$, which is reflected in equation (\ref{eq:LL-CPT}). \cite{Rijmen2008} has discussed this method in more detail. 
This method will be applied to the ``burnt forest" case study.

What are the benefits of using a logistic regression CPT rather than the logistic regression result directly? In a standard Bayesian Network we can use CPTs which are derived from logistic regression but not the logit function itself.
Furthermore the computation of the effect probabilities using the CPT is more efficient than using the logit function.

The here proposed extension - the probability base vector extraction method - of Logistic Regression provides a ``bridge" to Bayesian Networks.

\subsection{Summary and Concluding Remarks}
  We have introduced several novel methods to generate CPTs. These methods have extended existing popular techniques. The ease of applying the introduced extensions to the existing techniques should help to make the new methods valuable assets. Two of the CPT newly introduced generation methods were based on modifying the conditional mean through potential surge and boundary limitation heuristics. The third - column extraction method - demonstrated how to create a CPT from a non-linear function (logistic regression).
  Here our proposition \ref{prop:CPT-column} played an important role to extract an approximate CPT.

  The case studies in later sections will be used to evaluate the goodness of these models. A new set of measures is introduced in the following section, which we argue are simple, more intuitive and give the possibility of comparing the quality of CPT approximation models. 
\section{Goodness of Models} \label{sec:Goodness of Models}
There are many classical assessment methods of the goodness for regression (e.g. $R^2$) and logistic regression (e.g. $R$-statistic).
Often the Kullback-Leibler (KL) divergence measure is found in the literature (e.g. \cite{Zhou2014,Zagorecki2013}) for CPT comparisons, 
which was originally developed to measure the relative entropy in information. 
It measures the divergence of $\tilde{C}$ from $C$ by using $\sum_i x_i \sum_j c_{ij} \ln \frac{c_{ij}}{\tilde{c}_{ij}}$, where $C=P(Z|X)$ is the correct CPT and $\tilde{C}$ the approximated CPT (here $x$ is $P(X)$).
However, the logarithm is difficult to explain in the context of conditional probabilities. Some authors have used the Euclidean distance instead, i.e. the distance measure between CPTs is: $\sum_i x_i \sum_j (c_{ij}-\tilde{c}_{ij})^2$.

This section introduce measures that determine the goodness of effect estimates and approximated CPTs given training and test data. 

\subsection{Effect comparison}
There are four measures we are interested in:
diagnostic error ($d$),
total average shift error ($\bar{s}$),
mean state error ($s_j$), and
absolute effect observation error ($\delta_i$).\footnote{If both components are estimates then ``deviation'' is a better term than ``error''.}
A good illustration of these errors can be found in figure \ref{fig:MPGIS-LogitCPT-errors-test-data}. In the previous sections we demonstrated methods to determine an approximate CPT $\tilde{C}$ given causes $X$ ($k\times n$ matrix) and effects $Z$ ($k\times m$ matrix) training data. The quality of the $CPT$ is evaluated by using test data $X_t$ and $Z_t$. Of course this assumes that we trust the test data to be meaningful. The above four measures are the result of comparing $Z_t$ and $\tilde{Z_t}:=X_t C$ with each other.

\begin{definition}
The \emph{absolute effect observation error} compares each effect observation with the corresponding approximation. Let $z_i \in Z_t$ be the $i^{th}$ effect observation. That means $z_i:=(z_{i1},\dots,z_{im})$ is a probability vector consisting out of $m$ states. The same applies to the approximated effect $\tilde{z}_i \in \tilde{Z_t}$. Thus we can describe the absolute effect error $\delta_i$ for the $i^{th}$ observation by:
\begin{equation}
    \delta_i := \frac{1}{2}\sum_{j=1}^m |z_{ij} - \tilde{z}_{ij}|.
\end{equation}
\end{definition}

This measure adds up all deviations and thus gives us the total shift error of an approximated effect. We observe that $0 \overset{!}{\leq } \delta_i \overset{!}{\leq } 1$. If the probability vector has a highly likely state then the error can be more significant. An effect error can be interpreted as the likeliness of determining the wrong state. $1-\delta_i$ is the goodness of the effect approximation.

Of course the average absolute error is also of interest:
\begin{equation}
    \bar{\delta} := \frac{1}{k} \sum_{i=1}^k \delta_i.
\end{equation}

\begin{definition}
Comparing each state $j$ across all $k$ samples defines the \emph{mean state error}. More precisely:
\begin{equation}
    s_j := \frac{1}{k}\sum_{i=1}^k |z_{ij} - \tilde{z}_{ij}|.
\end{equation}
\end{definition}
This is the likelihood of a certain state being wrong. $1-s_j$ is mean goodness of a state.

Now we are in the position to summarise the goodness of all states. We determine the average of the states being wrong by:
\begin{equation}
    \bar{s} := \frac{1}{m}\sum_{j=1}^m s_j.
\end{equation}
The average goodness of all states is thus $1-\bar{s}$. This is a ``dangerous" measure, because it depends on the number of states. The more states an effect has the more a wrong impression of goodness is perceived.

The second possible interpretation of $\bar{s}$ focuses on the shifting of probabilities between the states.
\begin{definition}
Thus, $\bar{s}$ will be also called \emph{total average shift error}.
\end{definition}

\begin{definition}
The \emph{diagnostic goodness}
\footnote{The diagnostic goodness will be also called \emph{diagnostic power}.}
 $g$ counts the number of agreeing effect maximum states and sets them in proportion to the total number of observations:
\begin{equation}
   g := \frac{1}{k} \sum_{i=1}^k \left[\underset{j \in \set{1,\dots m}}{\mbox{argmax}}~ z_{ij} \qeq \underset{j \in \set{1,\dots m}}{\mbox{argmax}}~ \tilde{z}_{ij} \right].
\end{equation}
\end{definition}
Note that the symbol $\qeq$ denotes the boolean operator (query equality) that gives the value one if the expression on the left is equal to the expression on the right side and otherwise zero.

In the above equation we made use of the Iverson brackets.
\footnote{\citet{Iverson_1962} introduced these brackets for true-or-false statements $[S] := \begin{cases} 1 & \mbox{if } S \mbox{ is true;} \\ 0 & \mbox{otherwise.} \end{cases}$. For instance the Kronecker delta is defined by $ \delta_{ij} := [ i \qeq j ] $.}

\begin{example}
Assume that $z_{1}=(.3~.7), z_{2}=(.4~.6), z_{3}=(.9~.1)$ and $\tilde{z}_{1}=(.2~.8), \tilde{z}_{2}=(.5~.5), \tilde{z}_{3}=(.7~.3)$. Thus the diagnostic goodness is determined by: $g:=\frac{1}{3}([2\qeq 2] + [2 \qeq \set{1,2}] + [2 \qeq 1])=\frac{1}{3}(1+0+0)=\frac{1}{3}$. Hence, the diagnostic error is $\frac{2}{3}$.
\end{example}

\begin{definition}
The \emph{diagnostic error} $d$ is:
\begin{equation}
    d := 1 - g
\end{equation}
\end{definition}

\subsection{CPT comparison}

The KL and Euclidean method can be used to compare CPTs. Here, we will introduce a new method. We assume that one CPT $C=(c_{ij})=P(Z|X)$ is ``correct" and the other one is an approximation $\tilde{C}=(\tilde{c}_{ij})$. It is tempting to introduce a relative error measure. However, it can be easily shown that these are inappropriate for probability measures.
In the previous section we introduced the absolute effect observation error. This measure is based on the observation that probabilities are ``shifted" between states. We apply this principle to the CPT to obtain the

\begin{definition} \emph{CPT shift error}
\begin{equation}
    \bar{\delta} := \frac{1}{n} \sum_{j=1}^n (\frac{1}{2} \sum_{i=1}^m \abs{c_{ij} - \tilde{c}_{ij}})
              = \frac{1}{2n} \sum_{j=1}^n \sum_{i=1}^m \abs{c_{ij} - \tilde{c}_{ij}}.
\end{equation}
\end{definition}
This was previously known as the average of the absolute effect observation error. As before completely distinct CPTs are identified.
\begin{example}[Distinct CPTs]
Given are two distinct CPTS $C$ and $\tilde{C}$.
\begin{equation}C=
		\begin{pmatrix}
       1 &        0 &        0 \\
       0 &        1 &        0 \\
       0 &        0 &        1 \\
       0 &        0 &        0
    \end{pmatrix}
		,~~
\tilde{C} =     
	\begin{pmatrix}
       0 &        0 &        1 \\
       1 &        0 &        0 \\
       0 &        1 &        0 \\
       0 &        0 &        0
    \end{pmatrix}
\end{equation}
The CPT shift error is $\bar{\delta} = 100\%$.
\end{example}

\section{Case Study (1) - Burnt forest desertification risk}\label{sec:PPD_NN_Maria}
A Bayesian network to determine the risk of burnt forest desertification was introduced in \citet{BN_NeuralNetworks}.
The main purpose of this paper was to obtain the correspondence between Bayesian and Neural Networks. As well as discussing this link a case study about forest desertification was given.

Our objective in this section is to make use of the given Bayesian Network and the corresponding test data. We generate from this data several CPTs applying the probability boundary limitation method, probability potential surge method and the probability base vector extraction method. The goodness will be tested in respect of the CPTs predictive power and its total average shift error.

\subsection{Bayesian Network and Data}
Regeneration of Mediterranean forests is usually achieved between two to five years. The potential of this regeneration depends on the soil depth, ground aspects and animals in the area. On the other hand there is risk of erosion, which is influenced by the slope, the rock type and again the soil depth.
In figure \ref{fig:mp_GIS}
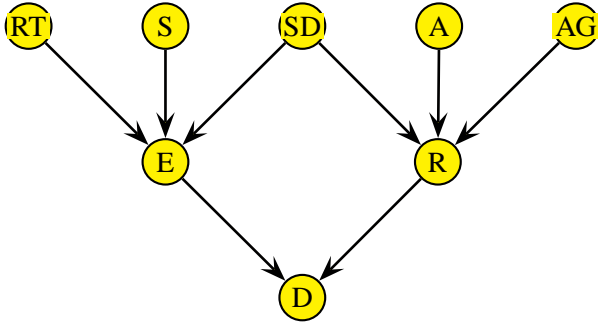
\begin{figure}
  \centering
	\psset{unit=.9cm,arrowscale=2, radius=0.35,fillstyle=solid, fillcolor=yellow}
	\begin{pspicture}[showgrid=false](-4.30,0.00)(4.30,4.30)
	\Cnode(-4,4){node_1} \rput(-4,4){\setlength{\fboxsep}{1pt}\colorbox{yellow}{\color{black} \small{RT}}}
	\Cnode(-2,4){node_2} \rput(-2,4){\setlength{\fboxsep}{1pt}\colorbox{yellow}{\color{black} \small{S}}}
	\Cnode(0,4){node_3} \rput(0,4){\setlength{\fboxsep}{1pt}\colorbox{yellow}{\color{black} \small{SD}}}
	\Cnode(2,4){node_4} \rput(2,4){\setlength{\fboxsep}{1pt}\colorbox{yellow}{\color{black} \small{A}}}
	\Cnode(4,4){node_5} \rput(4,4){\setlength{\fboxsep}{1pt}\colorbox{yellow}{\color{black} \small{AG}}}
	\Cnode(-2,2){node_6} \rput(-2,2){\setlength{\fboxsep}{1pt}\colorbox{yellow}{\color{black} \small{E}}}
	\Cnode(1.9806,2){node_7} \rput(1.9806,2){\setlength{\fboxsep}{1pt}\colorbox{yellow}{\color{black} \small{R}}}
	\Cnode(0,0){node_8} \rput(0,0){\setlength{\fboxsep}{1pt}\colorbox{yellow}{\color{black} \small{D}}}
	\ncline[linewidth=1pt,linecolor=black]{->}{ node_1}{node_6}
	\ncline[linewidth=1pt,linecolor=black]{->}{ node_2}{node_6}
	\ncline[linewidth=1pt,linecolor=black]{->}{ node_3}{node_6}
	\ncline[linewidth=1pt,linecolor=black]{->}{ node_3}{node_7}
	\ncline[linewidth=1pt,linecolor=black]{->}{ node_4}{node_7}
	\ncline[linewidth=1pt,linecolor=black]{->}{ node_5}{node_7}
	\ncline[linewidth=1pt,linecolor=black]{->}{ node_6}{node_8}
	\ncline[linewidth=1pt,linecolor=black]{->}{ node_7}{node_8}
    \end{pspicture}
  \caption{Bayesian Network for Burnt Forest Desertification
  (RT: Rock Type, S: Ground Slope, SD: Soil Depth, A: Ground Aspect, AG: Animal Grazing,
   E: Risk of Erosion, R: Regeneration Potential, D: Risk of Desertification).}
  \label{fig:mp_GIS}
\end{figure}
we show the complete GIS Bayesian Network. We focus on the lower part of this network described by the following nodes: Risk of Erosion $E$, Regeneration Potential $R$ and Risk of Desertification $D$. $E$ and $R$ have three states each, whilst $D$ has five states. Thus the resulting CPT of $D$ will be $5\times9$ matrix.
The paper \citet{BN_NeuralNetworks} details two tables of data, which we will use as input to compute the conditional probability table.
For the convenience of the reader we reproduced the tables and added them to the electronic companion. The first table
describes the training data (39 rows) and the second table gives data (14 rows) for comparative purposes.
Each table contains four main columns: the first one describes the site and the other three represent probability
vectors for each Bayesian node. We have examined the data for any obvious errors. We observed that each probability vector adds up to one as it should. We have also checked whether observations repeat themselves. Inspection of the data shows that some of the combined $E$ and $R$ observations repeat themselves up to five times, which results in 19 distinct combined $E$ and $R$ observations only. The test data $E_t, R_t$ and $Z_t$ for comparative purposes consists out of 14 tuples of which are 11 distinct.

\subsection{Boundary limitation  and potential surge  regressions}
We will now determine the CPT and use it to check its goodness. The background of how to derive the CPT has been explained in previous sections. First we combine parent observations ($X := E \circlearrowleft R$, see algorithm \ref{alg:nNodes2One}). Next we compute the CPT basis by $B^*:=b^*_{i,j} := ((X'X)^{-1} X'D)'$. Finally we transform the CPT basis into a CPT $B$ that guarantees that each element is a probability and that the column sum adds up to one. We use boundary limitations and potential surge methods to achieve the fulfilment of these constraints. The heuristics operate on column 1,2, 7 and 8 and lead to a maximum change of 13.4\% for $b^*_{2,2}$. The boundary limitation method leads to a CPT, that has a lower average shift error 1.17\% than the potential surge method on the test data. This CPT is shown in table \ref{table:GIS-CPT}.
\begin{table*}
    \begin{small}\begin{center}
    \begin{tabular}{r|rrr|rrr|rrr}
       \hline
       $d|er$ & $e_1r_1 $ & $e_1r_2 $ & $e_1r_3 $ & $e_2r_1 $ & $e_2r_2 $ & $e_2r_3 $ & $e_3r_1 $ & $e_3r_2 $ &  $e_3r_3$ \\
        \hline
        $d_1$ &    11.18\% &     1.62\% &    22.58\% &     4.61\% &    14.77\% &    13.09\% &    16.35\% &     0.00\% &     9.30\% \\
        $d_2$ &    34.07\% &     0.00\% &    53.50\% &     1.91\% &    28.70\% &    18.59\% &     5.37\% &    10.86\% &     7.99\% \\
        $d_3$ &    44.70\% &     7.86\% &     8.62\% &    76.22\% &    41.67\% &    53.02\% &    75.84\% &     9.43\% &    55.73\% \\
        $d_4$ &     0.00\% &    71.72\% &     1.73\% &     8.73\% &     6.81\% &    11.84\% &     0.00\% &    47.95\% &    25.47\% \\
        $d_5$ &    10.06\% &    18.80\% &    13.57\% &     8.53\% &     8.05\% &     3.46\% &     2.45\% &    31.76\% &     1.50\% \\
        \hline
    \end{tabular}
    \end{center}\end{small}
    \caption{GIS-CPT derived with probability boundary limitation method.\label{table:GIS-CPT}}
\end{table*}
The average shift error between $n$ parent observations ($E_t,R_t$) multiplied with the CPT $B$ and the test data $D_t$ is computed by:
\begin{equation}
    \epsilon = \frac{1}{m n} \sum_{i,j} |B(E_t \circlearrowleft R_t) - D_t|.
\end{equation}
The CPTs have the same predictive power on the test data, i.e. they select the maximum probability state in 13 out of 14 cases ($\frac{13}{14}=92.9\%$) correctly. The highest average shift error of 1.31 \% occurs when using the distinct data set and the shift-normalisation heuristic. As we can see the difference between the lowest and highest total average shift error of 0.14\% is insignificant. Thus we have demonstrated with this case study that  boundary limitation  and potential surge methods generate CPTs which posses good (92.9\%) predictive power that contain very low (less than 1.31\%) total average shift errors.

\subsection{Logistic Regression}
The multinomial logistic regression gives an average shift error between approximated and observed effect test data of 2.1\%. This was derived from the distinct data set. As a first step we combined the nodes $E$ and $R$ and determined the logistic regression parameters.
Using these it would be possible to compute any effect probabilities. However, in this paper the focus is on the generation of a CPT. Thus it is necessary to determine the CPT based on the logistic regression parameters. This is achieved by applying a $9\times 9$ canonical matrix $E$ as input to the equation system \ref{eq:LL-CPT}. This results in the CPT $\tilde{C}$ shown in table \ref{table:GIS-CPT-MLR}.
\begin{table*}
    \begin{small}\begin{center}
        \begin{tabular}{r|rrr|rrr|rrr}
        \hline
           $d|er$ & $e_1r_1 $ & $e_1r_2 $ & $e_1r_3 $ & $e_2r_1 $ & $e_2r_2 $ & $e_2r_3 $ & $e_3r_1 $ & $e_3r_2 $ & $e_3r_3 $ \\
        \hline
            $d_1$ &     13.9\% &      0.1\% &     23.3\% &      4.5\% &     23.0\% &     10.6\% &     14.1\% &      1.8\% &      8.9\% \\
            $d_2$ &     28.4\% &      0.4\% &     50.6\% &      7.6\% &     22.0\% &     19.5\% &      8.2\% &      5.2\% &      9.4\% \\
            $d_3$ &     47.6\% &      0.8\% &     10.9\% &     70.7\% &     40.4\% &     55.5\% &     70.2\% &     10.6\% &     51.2\% \\
            $d_4$ &      2.3\% &     96.9\% &      4.5\% &      7.8\% &      7.3\% &     10.4\% &      5.9\% &     28.0\% &     26.6\% \\
            $d_5$ &      7.9\% &      1.8\% &     10.7\% &      9.4\% &      7.4\% &      4.0\% &      1.7\% &     54.4\% &      3.9\% \\
        \hline
        \end{tabular}
    \end{center}\end{small}
    \caption{GIS-CPT $\tilde{C}$ derived with Logistic Multinomial Regression.\label{table:GIS-CPT-MLR}}
\end{table*}
We use this CPT in order to determine the effect probabilities for the test data by $\tilde{Z} = \tilde{C} (E_t\circlearrowleft R_t$). This must be compared with the effects from the test data $Z_t$. In figure \ref{fig:MPGIS-LogitCPT-errors-test-data} we show several comparisons.
\begin{figure*}
  \centering
  \includegraphics[height=7cm]{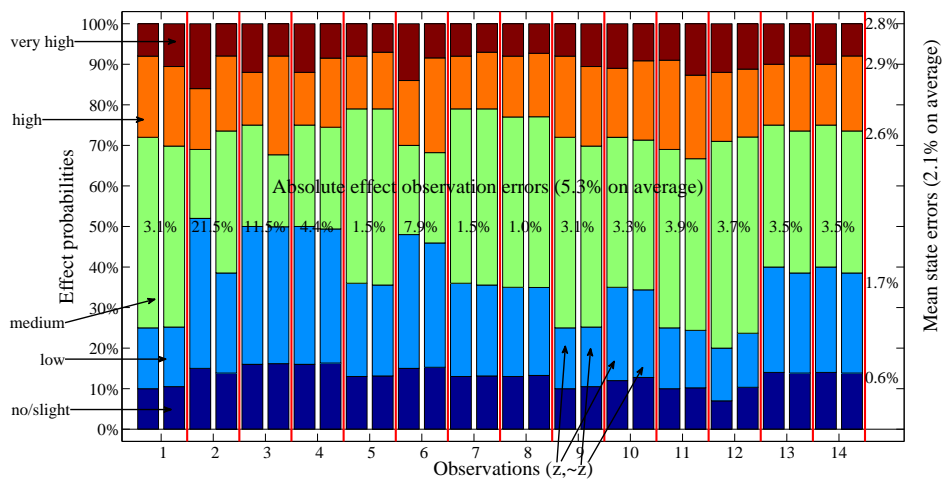}
  \caption{Logistic Regression - Burnt Forest Desertification: Goodness of computed effect probabilities by comparing them to the given test data.}
  \label{fig:MPGIS-LogitCPT-errors-test-data}
\end{figure*}
We have placed each test data effect next to an approximated effect. Thus we have compared 14 effects and their approximation with each other. The absolute errors of each pair are given. We notice that effect observation two and three lead to approximations with deviations of 22.5\% and 11.6\% respectively. Such deviations indicate irregularity and suggest a verification of the test data. Hence, this indicates that our methods can be used to identify data irregularities. On average absolute effect observation errors are 5.3\%, which appears to be quite high. That means if we require that all states are correctly identified we will be wrong in one out of 20 cases roughly spoken. On the other hand if we are only interested in identifying one state correctly the average state error is 2.1\%.
Notice that the first state (no/slight) is approximated best with 0.6\%. The worst state to approximate is the ``high" state, which has a mean state error of 2.9\%. 13 out of 14 sample states were chosen correctly, i.e. 92.9\% diagnostic goodness. If we compare the potential surge method and the logistic regression total average shift errors with each other we see that $1.3\%-2.1\% = -0.8\%$ the logistic regression is performing worse.

The comparison of the effect training data and the approximations,
show that the total average shift error for the test data is 1.4\%. In case we use the distinct training data we get the following error measures. The diagnostic error is 7.1\%. The absolute effect observation error is on average 5.3\%. The total average shift error is 2.1\%.

Using the potential surge method gave a diagnostic value of 92.9\% and a total average shift error of 1.3\%. The logistic regression had the same diagnostic value but a slightly higher total average shift error of 1.4\%. Thus we can assume that both CPT approximations are fairly reliable.


\section{Case Study (2) - Military Effects}\label{sec:PPD_Real_World_Application}
 The case study  presented here contains military effects, which constitute critical success factors (CSF) for a vignette. Typical military effects are find, destroy, secure and many more (see \citet{Louvieris_2007} for details). These effects are assessed by means of a Bayesian Network.

\subsection{Bayesian Network and Data}
 The introduced effect models depend on the status of the situation and the relative morale. In this paper we will not discuss the challenges of obtaining evidence for these measures nor the derivation of this Bayesian Network - respective information can be found in \citet{DBLP:conf/damas/LouvierisGMWOLH05}.
 We will focus our discussion on converging Bayesian Networks.
The effect CSF node $E$ has two parents: situation $S$ and relative morale $M$.

 Each node has three states. The objective is to derive the CPT for $E$.
Two lots of four subject matter experts (SME) were asked to provide the CPT directly, i.e. each group had to do a consensus labelling.
Additionally they had to provide probability vectors for $S, M$ and $E$ for the effects observed in the vignettes, whenever they noticed changes. The consensus labelling produced a table of 83 rows and 9 columns (see table in electronic companion).
Deleting identical entries gives 54 distinct sample points (rows).

Subject matter experts provided the CPT show in table \ref{table:SME_CPT}.
\begin{table*}
  \centering
	\begin{tabular}{r|rrr|rrr|rrr}
\hline
 Situation &           \multicolumn{ 3}{|c}{good} &       \multicolumn{ 3}{|c}{critical} &            \multicolumn{ 3}{|c}{bad} \\
    Morale &       high &     parity &        low &       high &     parity &        low &       high &     parity &        low \\
\hline
  achieved &     95.0\% &     85.0\% &     70.0\% &     50.0\% &     50.0\% &     25.0\% &      5.0\% &      3.0\% &      2.0\% \\
   at risk &      3.0\% &     11.0\% &     20.0\% &     35.0\% &     30.0\% &     35.0\% &      5.0\% &      4.0\% &      3.0\% \\
not achieved &      2.0\% &      4.0\% &     10.0\% &     15.0\% &     20.0\% &     40.0\% &     90.0\% &     93.0\% &     95.0\% \\
\hline
\end{tabular}
  \caption{Subject matter expert CPT.}
  \label{table:SME_CPT}
\end{table*}
We noted that certain probability entries varied from one SME to another by up to 25\% before agreement. A direct comparison of the SME CPT and the Regression CPT with probability potential surge method shows significant differences. This indicates that SMEs have substantially different interpretations of factors.

\subsection{Probability boundary limitation and potential surge method}
Using the probability boundary limitation method leads to an absolute column sum error of 88.4\% and an average shift error of 29.5\%.
When the regression CPT was applied to the observations the correct effect state was chosen in 38 out of 54 cases (70.4\%). Here ``correct" is based on the assumption that the CPT provided by the SME is without any fault.
If we assume the effect observations are the basis for comparison, we obtain that the correct effect state with $B$ is chosen 50 times (i.e. diagnostic goodness of 92.6\%), whilst the usage of the SME CPT lead to 40 correct state selections (i.e. diagnostic power of 74.0\%).

For the command and control agents the correct choice of the state is important so that agents can take the appropriate actions. The above results are encouraging because given a similar scenario agents will make right decisions in the majority of cases (this case study suggest 93\%). On the other hand without prior training data (i.e. SME CPT only) the right choice would have been done in 74\% of the observation points. Thus the generated CPT has a 19\% better diagnostic value.

When we apply the probability potential surge method we can improve the correct state choice up to 98.2\% (only one state was chosen incorrectly) under the assumption that effect states were correct. Table \ref{table:Regression_CPT_shift_quick_fix} shows the regression CPT with the \emph{potential surge} method.
\begin{table}[!h]
  \centering
\begin{tabular}{rr|rrr}
\hline
\begin{sideways}Situation\end{sideways} & \multicolumn{1}{c|}{\begin{sideways}Morale\end{sideways}} & \multicolumn{1}{c}{\begin{sideways}achieved\end{sideways}} & \multicolumn{1}{c}{\begin{sideways}at risk\end{sideways}} & \multicolumn{1}{c}{\begin{sideways}not achieved\end{sideways}} \\
\hline
\multicolumn{1}{c}{\multirow{3}[2]{*}{\begin{sideways}good\end{sideways}}} & high  & 49.1\% & 50.9\% & 0.0\%\\
\multicolumn{1}{c}{} & parity & 84.2\% & 0.0\% & 15.8\% \\
\multicolumn{1}{c}{} & low   & 0.0\% & 59.4\% & 40.6\%\\
\hline
\multicolumn{1}{c}{\multirow{3}[2]{*}{\begin{sideways}critical\end{sideways}}} & high  & 0.0\% & 42.3\% & 57.7\%\\
\multicolumn{1}{c}{} & parity & 0.0\% & 60.3\% & 39.7\% \\
\multicolumn{1}{c}{} & low   & 60.8\% & 0.0\% & 39.2\%\\
\hline
\multicolumn{1}{c}{\multirow{3}[2]{*}{\begin{sideways}bad\end{sideways}}} & high  & 70.0\% & 0.0\% & 30.0\%\\
\multicolumn{1}{c}{} & parity & 0.0\% & 20.8\% & 79.2\% \\
\multicolumn{1}{c}{} & low   & 23.8\% & 30.0\% & 46.1\%\\
\hline
\end{tabular}%

  \caption{Regression CPT with potential surge method.}
  \label{table:Regression_CPT_shift_quick_fix}
\end{table}

Thus we have shown in this case study that the approximated CPT has a high diagnostic value (98.2\%) of correct decisions, whilst the SME CPT managed 74\% correct decision - assuming correct effect probabilities. This demonstrated that a generated CPT from the cause and effect observations will have a better diagnostic power (by 24.2\%) than the SME CPT.

\section{Conclusion}

A review on the existing CPT generation methods revealed the need to apply multivariate data analysis techniques to Bayesian Networks rather than noisy-functional methods.
Thus, the main contributions of this paper are new CPT generation methods in particular when the input from parent nodes is observed as soft evidence. The first two methods - the boundary limitation  and potential surge  method - map conditional expectations to the probability space.
The third - probability base vector extraction - method is manifested through its application to the multinomial logistic regression.
Introducing Bayesian Networks using matrix notation enabled us to propose tools such as a combine operator and the necessary ``column extraction" proposition. The operator combines parent probability vectors of a converging Bayesian Network into a single probability vector. This allows the efficient computation of effect probabilities and assists in the generation of CPTs.  The ``column extraction" proposition proves particular beneficial when using non-linear methods to obtain a representative CPT.

The CPT generation methods proposed in this paper were applied to two case studies and demonstrated high predictive power. The proposed measures of goodness give clear and intuitive statements about the quality of the approximations and are found to be the best measurement methods in this context.
The military effects case study showed that generic subject matter expert CPTs do not achieve the quality of the generated CPTs in respect to the effect probability computations. The burnt forest case study showed the goodness of the developed multivariate data CPT generation techniques. A comparison of the methods in respect to the case studies suggest similar goodness of the techniques. Moreover, the potential surge  and boundary limitation techniques demonstrated to be better than probability base vector extraction method applied to multinomial logistic regression.

There are several avenues for future research. The first is to gather further evidence of the effectiveness of the proposed methods by applying them to the classic Bayesian Network test instances\footnote{\url{http://www.bnlearn.com/bnrepository/}}. However, this would require the generation of hard and soft evidence for the nodes. Furthermore, this should also be accompanied by a theoretical study.  
The second avenue of research should be an analysis of the here introduced CPT comparison measure versus the K-L and euclidean measure. As a third avenue it should be investigated of whether the converging Bayesian Networks are superior to the Extended Belief Rule-Based systems \cite{Calzada2015}.

In summary the here proposed CPT generation methods will lead to more reliable decision making within Bayesian Networks when it comes to the prediction of effects.

 \section*{Acknowledgement}
  The work reported in this paper was sponsored by the United Kingdom MOD Data and Information Fusion Defence Technology Centres (DIF DTC) Research Programme.

  The ``burnt forest paper" \citet{BN_NeuralNetworks} proofed to be a useful resource.
  We would like to thank Maria Petrou for providing feedback.


\bibliographystyle{wg-elsarticle-harv}
\bibliography{doc_CPT-2015}


\end{document}